\definecolor{mydarkblue}{rgb}{0,0.08,0.45}
\newcommand{\controlf}{c}  
\newcommand{\discreteDist}{p(b|\theta)}
\newcommand{\loss}{f(b)}
\newcommand{\expectedLoss}{\mathbb{E}_{\discreteDist{}} \! \left[ \, \loss{} \right]}
\newcommand{\E}{\mathbb{E}}
\newcommand{\LL}[1]{\frac{\partial \log \pi(a_{#1}| s_{#1}, \theta)}{\partial \theta}}
\newcommand{\PT}{\frac{\partial}{\partial \theta}}
\newcommand{\PPH}{\frac{\partial}{\partial \phi}}
\newcommand{\LP}[1]{\PT \log p(#1)}
\newcommand{\LAX}{{\textnormal{LAX}}}
\newcommand{\DLAX}{{\textnormal{DLAX}}}
\newcommand{\RL}{{\textnormal{RL}}}
\newcommand{\RELAX}{{\textnormal{RELAX}}}
\newtheorem{theorem}{Theorem}[section]
\title{Backpropagation through the Void:\\
Optimizing control variates for \\ black-box gradient estimation}
\author{Will Grathwohl, Dami Choi, Yuhuai Wu, Geoffrey Roeder, David Duvenaud \\
University of Toronto and Vector Institute\\
\texttt{\{wgrathwohl, choidami, ywu, roeder, duvenaud\}@cs.toronto.edu}
}
\begin{document}
\maketitle
\begin{abstract}
Gradient-based optimization is the foundation of deep learning and reinforcement learning, but is difficult to apply when the mechanism being optimized is unknown or not differentiable.
We introduce a general framework for learning low-variance, unbiased gradient estimators, applicable to black-box functions of discrete or continuous random variables.
Our method uses gradients of a surrogate neural network to construct a control variate, which is optimized jointly with the original parameters.
We demonstrate this framework for training discrete latent-variable models.
We also give an unbiased, action-conditional extension of the advantage actor-critic reinforcement learning algorithm.
\end{abstract}

\section{Introduction}
Gradient-based optimization has been key to most recent advances in machine learning and reinforcement learning.
The back-propagation algorithm \citep{rumelhart1986learning}, also known as reverse-mode automatic differentiation~\citep{speelpenning1980compiling, rall1981automatic} computes exact gradients of deterministic, differentiable objective functions.
The reparameterization trick \citep{williams1992simple, kingma2013autoencoding, rezende2014stochastic} allows backpropagation to give unbiased, low-variance estimates of gradients of expectations of continuous random variables.
This has allowed effective stochastic optimization of large probabilistic latent-variable models.

Unfortunately, there are many objective functions relevant to the machine learning community for which backpropagation cannot be applied. In reinforcement learning, for example, the function being optimized is unknown to the agent and is treated as a black box~\citep{schulman2015gradient}. Similarly, when fitting probabilistic models with discrete latent variables, discrete sampling operations create discontinuities giving the objective function zero gradient with respect to its parameters.
Much recent work has been devoted to constructing gradient estimators for these situations.
In reinforcement learning, advantage actor-critic methods~\citep{sutton2000policy} give unbiased gradient estimates with reduced variance obtained by jointly optimizing the policy parameters with an estimate of the value function.
In discrete latent-variable models, low-variance but biased gradient estimates can be given by continuous relaxations of discrete variables~\citep{maddison2016concrete, jang2016categorical}.

A recent advance by \citet{tucker2017rebar} used a continuous relaxation of discrete random variables to build an unbiased and lower-variance gradient estimator,
and showed how to tune the free parameters of these relaxations to minimize the estimator's variance during training.
%
We generalize the method of \citet{tucker2017rebar} to learn a free-form control variate parameterized by a neural network.
This gives a lower-variance, unbiased gradient estimator which can be applied to a wider variety of problems.
Most notably, our method is applicable even when no continuous relaxation is available, as in reinforcement learning or black-box function optimization.


\begin{figure}[h]
\hspace{-1em}
\centering
\begin{tabular}{c c}
\includegraphics[width=.48\textwidth]{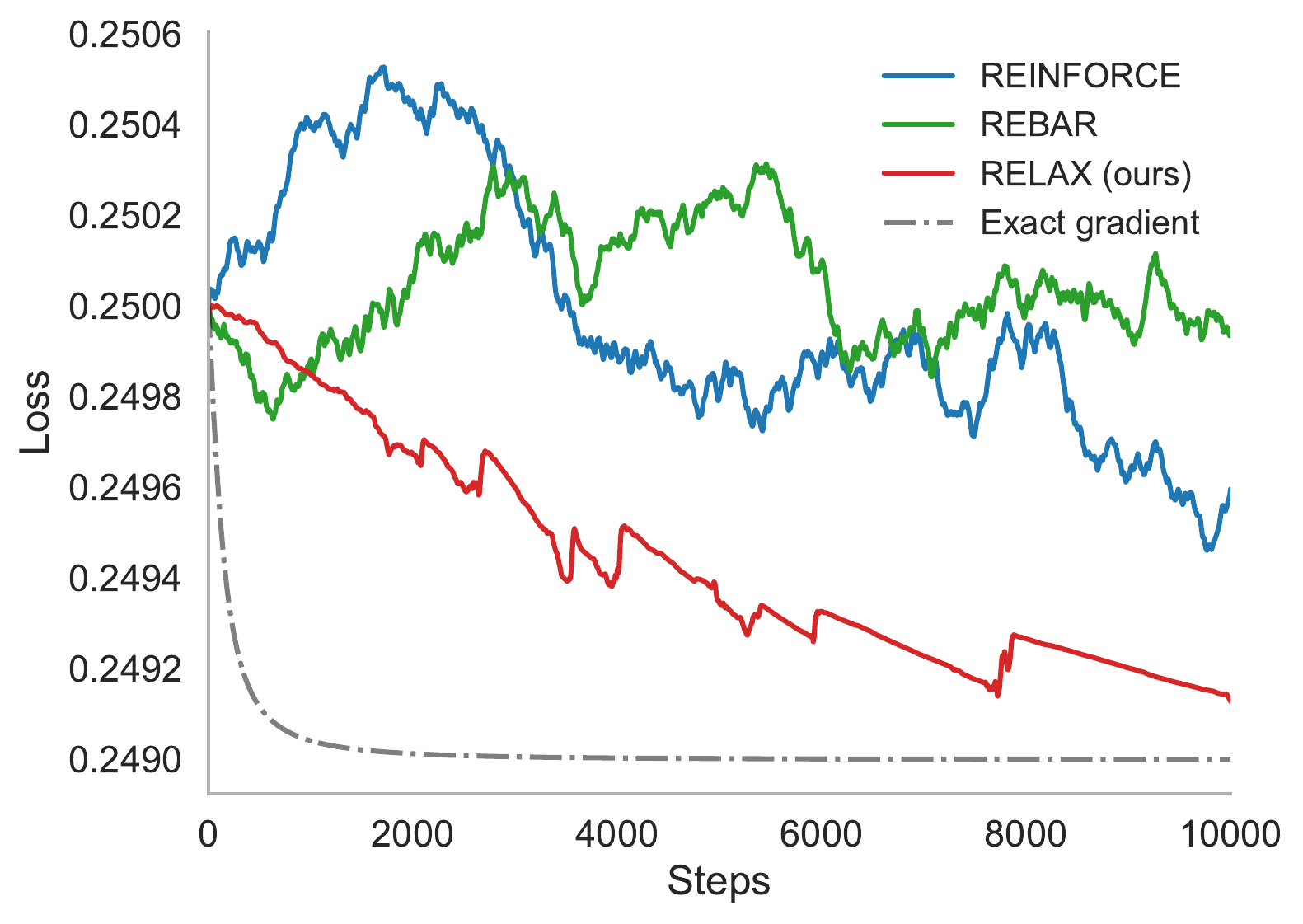}
&
\includegraphics[width=.48\textwidth]{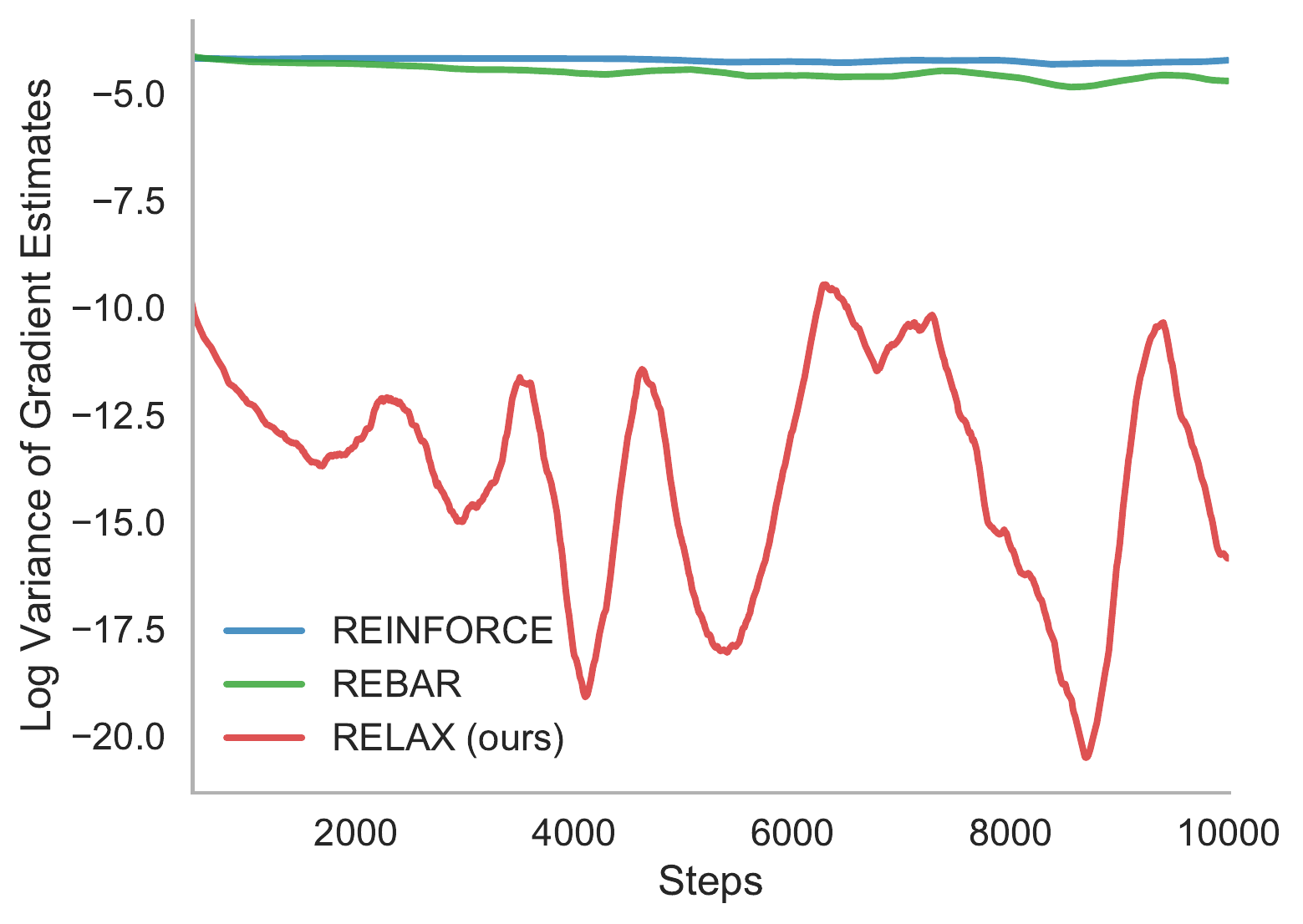}
\end{tabular}
\vspace{-1em}
\caption{
\emph{Left:} Training curves comparing different gradient estimators on a toy problem: ${\mathcal{L}(\theta) = \E_{p(b|\theta)} [ (b - 0.499)^2 ]}$
\emph{Right:} Log-variance of each estimator's gradient.
}
\label{first figure}
\end{figure}

\section{Background: Gradient estimators}
How can we choose the parameters of a distribution to maximize an expectation?
This problem comes up in reinforcement learning, where we must choose the parameters $\theta$ of a policy distribution $\pi(a|s, \theta)$ to maximize the expected reward $\mathbb{E}_{\tau \sim \pi} \left[ R \right]$ over state-action trajectories $\tau$.
It also comes up in fitting latent-variable models, when we wish to maximize the marginal probability ${p(x|\theta) = \sum_z p(x|z) p(z|\theta) = \mathbb{E}_{p(z|\theta)} \left[ p(x|z) \right]}$.
In this paper, we'll consider the general problem of optimizing
\begin{align}
\mathcal{L}(\theta) = \expectedLoss{}.
\end{align}

When the parameters $\theta$ are high-dimensional, gradient-based optimization is appealing because it provides information about how to adjust each parameter individually.
Stochastic optimization is essential for scalablility, but is only guaranteed to converge to a fixed point of the objective when the stochastic gradients $\hat g$ are unbiased, i.e. ${\mathbb{E} \left[ \hat g \right] = \PT \mathcal{L}(\theta)}$~\citep{robbins1951stochastic}.

How can we build unbiased, stochastic gradient estimators?
There are several standard methods:

\paragraph{The score-function gradient estimator}
One of the most generally-applicable gradient estimators is known as the score-function estimator, or REINFORCE~\citep{williams1992simple}:
\begin{align}
\hat g_\textnormal{REINFORCE}[f] =  f \left( b \right) \PT \log p(b | \theta), \qquad b \sim p(b | \theta)
\end{align}
This estimator is unbiased, but in general has high variance.
Intuitively, this estimator is limited by the fact that it doesn't use any information about how $f$ depends on $b$, only on the final outcome $f(b)$.

\paragraph{The reparameterization trick}
When $f$ is continuous and differentiable, and the latent variables $b$ can be written as a deterministic, differentiable function of a random draw from a fixed distribution, the reparameterization trick \citep{williams1992simple, kingma2013autoencoding, rezende2014stochastic} creates a low-variance, unbiased gradient estimator by making the dependence of $b$ on $\theta$ explicit through a reparameterization function $b=T(\theta, \epsilon)$:
\begin{align}
\hat g_\textnormal{reparam}[f]
= \PT f \left( b \right)
= \frac{\partial f}{\partial T}\frac{\partial T}{\partial \theta} , 
\qquad \epsilon \sim p(\epsilon) 
\end{align}
This gradient estimator is often used when training high-dimensional, continuous latent-variable models, such as variational autoencoders.
One intuition for why this gradient estimator is preferable to REINFORCE is that it depends on ${\partial f} / {\partial b}$, which exposes the dependence of $f$ on $b$.


%
%

\paragraph{Control variates}
Control variates are a general method for reducing the variance of a stochastic estimator.
A control variate is a function $\controlf(b)$ with a known mean $\mathbb{E}_{p(b)} [ \controlf(b) ]$.
Given an estimator $\hat g(b)$, subtracting the control variate from this estimator and adding its mean gives us a new estimator:
\begin{align}
\hat g_\textnormal{new}(b) = \hat g(b) - \controlf(b) + \mathbb{E}_{p(b)}[\controlf(b)]
\end{align}
This new estimator has the same expectation as the old one, 
%
%
but has lower variance if $\controlf(b)$ is positively correlated with $\hat g(b)$.

\section{Constructing and optimizing a differentiable surrogate}
\label{lax section}
In this section, we introduce a gradient estimator for the expectation of a function $\PT \E_{p(b|\theta)}[f(b)]$ that can be applied even when $f$ is unknown, or not differentiable, or when $b$ is discrete.
Our estimator combines the score function estimator, the reparameterization trick, and control variates.

First, we consider the case where $b$ is continuous, but that $f$ cannot be differentiated.
Instead of differentiating through $f$, we build a surrogate of $f$ using a neural network $c_\phi$, and differentiate $c_\phi$ instead.
Since the score-function estimator and reparameterization estimator have the same expectation,
we can simply subtract the score-function estimator for $c_\phi$ and add back its reparameterization estimator.
This gives a gradient estimator which we call LAX:
%
\begin{align}
\label{eq:cont_est}
\hat g_\LAX &= 
\hat{g}_\textnormal{REINFORCE}[f] - \hat{g}_\textnormal{REINFORCE}[c_\phi] + \hat{g}_\textnormal{reparam}[c_\phi] \nonumber\\
&= \left[ f(b) -c_\phi(b) \right] \PT \log p(b|\theta) + \PT c_\phi(b) \qquad b = T(\theta, \epsilon), \epsilon \sim p(\epsilon).
\end{align}
This estimator is unbiased for any choice of $c_\phi$.
When $c_\phi = f$, then \LAX{} becomes the reparameterization estimator for $f$.
Thus \LAX{} can have variance at least as low as the reparameterization estimator. An example of the relative bias and variance of each term in this estimator can be seen below.
\begin{figure}[h!]
\includegraphics[width=\columnwidth]{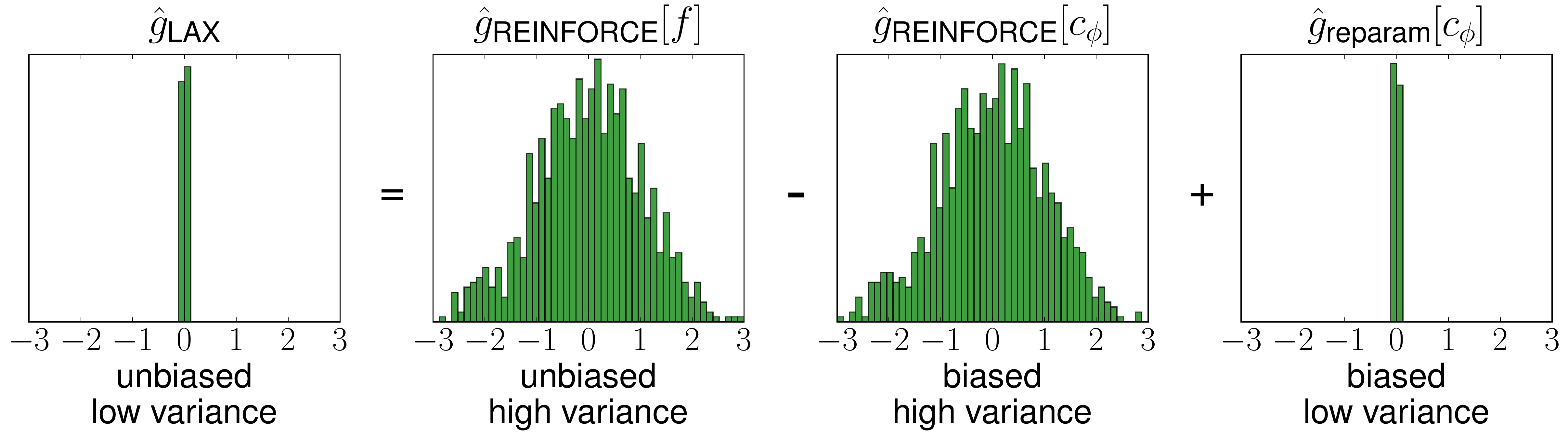}
\caption{Histograms of samples from the gradient estimators that create LAX. Samples generated from our one-layer VAE experiments (Section \ref{vae section}).}
\label{fig:grad hist}
\end{figure}

\subsection{Gradient-based optimization of the control variate}
Since $\hat g_\LAX$ is unbiased for any choice of the surrogate $c_\phi$, the only remaining problem is to choose a $c_\phi$ that gives low variance to $\hat g_\LAX$.
How can we find a $\phi$ which gives our estimator low variance?
We simply optimize $c_\phi$ using stochastic gradient descent, at the same time as we optimize the parameters $\theta$ of our model or policy.

To optimize $c_\phi$, we require the gradient of the variance of our estimator.
To estimate these gradients, we could simply differentiate through the empirical variance over each mini-batch.
Or, following \cite{ruiz2016overdispersed} and \cite{tucker2017rebar}, we can construct an unbiased, single-sample estimator using the fact that our gradient estimator is unbiased.
For any unbiased gradient estimator $\hat g$ with parameters $\phi$:
\begin{align}
\PPH \text{Variance}(\hat g)
= \PPH \E[\hat g^2] - \PPH \E[\hat g]^2
= \PPH \E[\hat g^2]
= \E \left[ \PPH \hat g^2 \right].
\label{eq:vargrad}
\end{align}  
Thus, an unbiased single-sample estimate of the gradient of the variance of $\hat g$ is given by $\partial \hat g^2 / \partial \phi$.

This method of directly minimizing the variance of the gradient estimator stands in contrast to other methods such as Q-Prop~\citep{gu2016q} and advantage actor-critic~\citep{sutton2000policy}, which train the control variate to minimize the squared error $(f(b) - c_\phi(b))^2$.
Our algorithm, which jointly optimizes the parameters $\theta$ and the surrogate $c_\phi$ is given in Algorithm~\ref{lax}.

\subsubsection{Optimal surrogate}
What is the form of the variance-minimizing $c_\phi$?
Inspecting the square of \eqref{eq:cont_est}, we can see that this loss encourages $c_\phi(b)$ to approximate $f(b)$, but with a weighting based on $\PT\log p(b|\theta)$.  
Moreover, as $c_\phi \rightarrow f$ then $\hat g_\textnormal{\LAX} \rightarrow \PT c_\phi$.
Thus, this objective encourages a balance between the variance of the reparameterization estimator and the variance of the REINFORCE estimator.
Figure~\ref{learned-relaxations} shows the learned surrogate on a toy problem.

\begin{algorithm}[h]
\begin{algorithmic}
\Require $f(\cdot)$, $\log p(b|\theta)$, reparameterized sampler $b = T(\theta, \epsilon)$, neural network $c_\phi(\cdot)$, \\ \qquad \quad~step sizes $\alpha_1, \alpha_2$ 
\While {not converged} 
	\State $\epsilon \sim p(\epsilon)$ \Comment Sample noise
	\State $b \leftarrow T(\epsilon, \theta)$ \Comment Compute input
	\State  $\hat g_\theta \leftarrow \left[f(b) - c_{\phi}(b) \right] \nabla_\theta \log p(b|\theta) + \nabla_\theta c_\phi(b)$ \Comment Estimate gradient of objective
	\State  $\hat g_\phi \leftarrow \partial \hat g_\theta^2 / \partial \phi$ \Comment Estimate gradient of variance of gradient
	\State $\theta \leftarrow \theta - \alpha_1 \hat{g}_\theta$ \Comment Update parameters
	\State $\phi \leftarrow \phi - \alpha_2 \hat{g}_\phi$ \Comment Update control variate
\EndWhile
\State \textbf{return} $\theta$ 
\end{algorithmic}
\caption{\LAX{}: Optimizing parameters and a gradient control variate simultaneously.}
\label{lax}
\end{algorithm}

\subsection{Discrete random variables and conditional reparameterization}
We can adapt the \LAX{} estimator to the case where $b$ is a discrete random variable by introducing a ``relaxed'' continuous variable $z$.
We require a continuous, reparameterizable distribution $p(z|\theta)$ and a deterministic mapping $H(z)$ such that $H(z) = b \sim p(b|\theta)$ when $z \sim p(z|\theta)$.
In our implementation, we use the Gumbel-softmax trick, the details of which can be found in appendix~\ref{resample}.

The discrete version of the \LAX{} estimator is given by:
\begin{align}
\label{eq:discrete lax}
\hat g_\DLAX = f(b) \PT \log p(b|\theta) - c_\phi(z) \PT \log p(z|\theta) + \PT c_\phi(z), \qquad b = H(z), z \sim p(z|\theta).
\end{align}
This estimator is simple to implement and general.
However, if we were able to replace the $\PT \log p(z|\theta)$ in the control variate with $\PT \log p(b|\theta)$ we should be able to achieve a more correlated control variate, and therefore a lower variance estimator. This is the motivation behind our next estimator, which we call RELAX.


To construct a more powerful gradient estimator, we incorporate a further refinement due to~\cite{tucker2017rebar}.
Specifically, we evaluate our control variate both at a relaxed input $z \sim p(z|\theta)$, and also at a relaxed input \emph{conditioned on the discrete variable $b$}, denoted $\tilde z \sim p(z|b, \theta)$. 
Doing so gives us:
\begin{align}
\hat g_\textnormal{RELAX} = \left[ f(b) - c_\phi(\tilde{z}) \right] \PT \log p(b|\theta) + \PT c_\phi(z) - \PT c_\phi (\tilde{z}) \\
\qquad b = H(z), z \sim p(z|\theta), \tilde{z} \sim p(z|b, \theta) \nonumber
\end{align}
This estimator is unbiased for any $c_\phi$.
A proof and a detailed algorithm can be found in appendix~\ref{relax proof}.
We note that the distribution $p(z|b,\theta)$ must also be reparameterizable.
We demonstrate how to perform this conditional reparameterization for Bernoulli and categorical random variables in appendix~\ref{resample}.

\subsection{Choosing the control variate architecture}
The variance-reduction objective introduced above allows us to use any differentiable, parametric function as our control variate $c_\phi$. 
How should we choose the architecture of $c_\phi$?
Ideally, we will take advantage of any known structure in $f$.

In the discrete setting, if $f$ is known and happens to be differentiable, we can use the concrete relaxation~\citep{jang2016categorical, maddison2016concrete} and let $c_\phi(z) = f(\sigma_\lambda(z))$.
In this special case, our estimator is exactly the REBAR estimator.
We are also free to add a learned component to the concrete relaxation and let $c_\phi(z) = f(\sigma_\lambda(z)) + {r}_\rho(z)$ where ${r}_\rho$ is a neural network with parameters~$\rho$ making $\phi = \{\rho, \lambda\}$.
We took this approach in our experiments training discrete variational auto-encoders.
If $f$ is unknown, we can simply let $c_\phi$ be a generic function approximator such as a neural network.
We took this simpler approach in our reinforcement learning experiments.

\subsection{Reinforcement learning}
We now describe how we apply the \LAX{} estimator in the reinforcement learning (RL) setting.
By reinforcement learning, we refer to the problem of optimizing the parameters $\theta$ of a policy distribution $\pi(a | s, \theta)$ to maximize the sum of rewards.
In this setting, the random variable being integrated over is $\tau$, which denotes a series of $T$ actions and states $[(s_1, a_1), (s_2, a_2), ..., (s_T, a_T)]$.
The function whose expectation is being optimized, $R$, maps $\tau$ to the sum of rewards ${R(\tau) = \sum_{t=1}^{T} r_t(s_t, a_t)}$.

Again, we want to estimate the gradient of an expectation of a black-box function: $ \PT \E_{p(\tau|\theta)}[R(\tau)]$.
The \emph{de facto} standard approach is the advantage actor-critic estimator (A2C)~\citep{sutton2000policy}:
%
%
%
\begin{align}
\hat g_{\textnormal{A2C}} = \sum_{t=1}^{T} \LL{t} \left[ \sum_{t'=t}^{T} r_{t'} - c_\phi(s_t) \right], \qquad a_t \sim \pi(a_t | s_t, \theta)
\label{eq:rl_a2c}
\end{align}
Where $c_\phi(s_t)$ is an estimate of the state-value function, $c_\phi(s) \approx V^\pi(s) = \E_{\tau}[R|s_1=s].$
This estimator is unbiased when $c_\phi$ does not depend on $a_t$.
The main limitations of A2C are that $c_\phi$ does not depend on $a_t$, and that it's not obvious how to optimize $c_\phi$.
Using the \LAX{} estimator addresses both of these problems.

First, we assume $\pi(a_t|s_t, \theta)$ is reparameterizable, meaning that we can write $a_t = a(\epsilon_t, s_t, \theta)$, where $\epsilon_t$ does not depend on $\theta$.
We again introduce a differentiable surrogate $c_\phi(a,s)$.
Crucially, this surrogate is a function of the action as well as the state.

The extension of LAX to Markov decision processes is: 
\begin{align}
\hat g_\LAX^{\RL} = \sum_{t=1}^{T} \LL{t} \left[ \sum_{t'=t}^{T} r_{t'} - c_\phi(a_t,s_t) \right] +\frac{\partial}{\partial\theta} c_\phi(a_t, s_t), \\
a_t = a(\epsilon_t,s_t, \theta) \qquad \epsilon_t \sim p(\epsilon_t)\nonumber.
\label{eq:rl_est}
\end{align}
This estimator is unbiased if the true dynamics of the system are Markovian w.r.t. the state $s_t$.
When $T = 1$, we recover the special case ${\hat g_\LAX^{\RL} = \hat g_\LAX}$.
Comparing $\hat g_\LAX^{\RL}$ to the standard advantage actor-critic estimator in~\eqref{eq:rl_a2c}, the main difference is that our baseline $c_\phi(a_t, s_t)$ is action-dependent while still remaining unbiased.

To optimize the parameters $\phi$ of our control variate $c_\phi(a_t, s_t)$, we can again use the single-sample estimator of the gradient of our estimator's variance given in~\eqref{eq:vargrad}.
This approach avoids unstable training dynamics, and doesn't require storage and replay of previous rollouts.

Details of this derivation, as well as the discrete and conditionally reparameterized version of this estimator can be found in appendix~\ref{rl appendix}.

\section{Scope and Limitations}
\label{limitations}
The work most related to ours is the recently-developed REBAR method~\citep{tucker2017rebar}, which greatly inspired our work.
The REBAR estimator is a special case of the \RELAX{} estimator, when the surrogate is set to ${c_\phi(z) = \eta \cdot f(\texttt{softmax}_\lambda(z))}$.
The only free parameters of the REBAR estimator are the scaling factor $\eta$, and the temperature $\lambda$, which gives limited scope to optimize the surrogate.
REBAR can only be applied when $f$ is known and differentiable.
Furthermore, it depends on essentially undefined behavior of the function being optimized, since it evaluates the discrete loss function at continuous inputs.

Because \LAX{} and \RELAX{} can construct a surrogate from scratch, they can be used for optimizing black-box functions, as in reinforcement learning settings where the reward is an unknown function of the environment.
\LAX{} and \RELAX{} only require that we can query the function being optimized, and can sample from and differentiate $p(b|\theta)$.


\paragraph{Direct dependence on parameters}
Above, we assumed that the function $f$ being optimized does not depend directly on $\theta$, which is usually the case in black-box optimization settings.
However, a dependence on $\theta$ can occur when training probabilistic models, or when we add a regularizer. 
In both these settings, if the dependence on $\theta$ is known and differentiable, we can use the fact that
\begin{align}
\PT \E_{p(b|\theta)}[f(b, \theta)] = \E_{p(b|\theta)}\left[\PT f(b, \theta) + f(b, \theta)\PT \log p(b|\theta) \right]
\end{align}
and simply add $\PT f(b, \theta)$ to any of the gradient estimators above to recover an unbiased estimator.

\section{Related work}
\citet{miller2017reducing} reduce the variance of reparameterization gradients in an orthogonal way to ours by approximating the gradient-generating procedure with a simple model and using that model as a control variate.
NVIL~\citep{mnih2014neural} and VIMCO~\citep{mnih2016variational} provide reduced variance gradient estimation in the special case of discrete latent variable models and discrete latent variable models with Monte Carlo objectives.
\citet{salimans2017evolution} estimate gradients using a form of finite differences, evaluating hundreds of different parameter values in parallel to construct a gradient estimate.
In contrast, our method is a single-sample estimator.



\citet{staines2012variational} address the general problem of developing gradient estimators for deterministic black-box functions or discrete optimization.
They introduce a sampling distribution, and optimize an objective similar to ours.
\citet{wierstra2014natural} also introduce a sampling distribution to build a gradient estimator, and consider optimizing the sampling distribution.
In the context of general Monte Carlo integration, \citet{oates2017control} introduce a non-parametric control variate that also leverages gradient information to reduce the variance of an estimator.

In parallel to our work, there has been a string of recent developments on action-dependent baselines for policy-gradient methods in reinforcement learning.  Such works include \citet{gu2016q} and \citet{gu2017interpolated} which train an action-dependent baseline which incorporates off-policy data. \cite{liu2017sample} independently develop a method similar to LAX applied to continuous control. \citet{Wu2018Factorized} exploit per-dimension independence of the action distribution in continuous control tasks to produce an action-dependent unbiased baseline. 





\section{Applications}
\label{Applications}
\begin{wrapfigure}[21]{R}{0.50\textwidth}
\centering
\vspace{-10mm}
\begin{tabular}{c}
REINFORCE\\
\hspace{-3mm}\includegraphics[width=0.50\columnwidth, clip, trim=0.5cm 8cm 0.8cm 0.5cm]{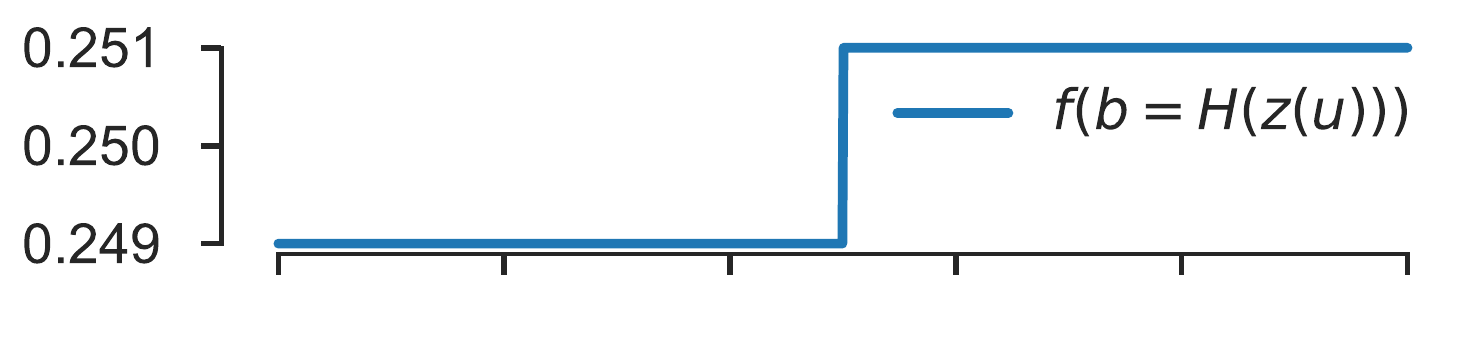}\\
REBAR\\
\hspace{-3mm}\includegraphics[width=0.50\columnwidth, clip, trim=0.5cm 5cm 0.8cm 3.9cm]{figures/relaxations_t_499_which_2}\\
RELAX\\
\hspace{-3mm}\includegraphics[width=0.50\columnwidth, clip, trim=0.5cm 0cm 0.8cm 7.2cm]{figures/relaxations_t_499_which_2}
\end{tabular}
\vspace*{-6mm}
\caption{The optimal relaxation for a toy loss function, using different gradient estimators.
Because REBAR uses the concrete relaxation of $f$, which happens to be implemented as a quadratic function, the optimal relaxation is constrained to be a warped quadratic.
In contrast, RELAX can choose a free-form relaxation.}
\label{learned-relaxations}
\end{wrapfigure}
We demonstrate the effectiveness of our estimator on a number of challenging optimization problems. Following~\citet{tucker2017rebar} we begin with a simple toy example to illuminate the potential of our method and then continue to the more relevant problems of optimizing binary VAE's and reinforcement learning.

\subsection{Toy experiment}
As a simple example, we follow \citet{tucker2017rebar} in minimizing $\mathbb{E}_{p(b|\theta)}[(b - t)^2]$ as a function of the parameter $\theta$ where {$p(b|\theta) = \textnormal{Bernoulli}(b|\theta)$}.
\citet{tucker2017rebar} set the target $t = .45$.
We focus on the more challenging case where $t = .499$.
Figures~\ref{first figure}a and \ref{first figure}b show the relative performance and gradient log-variance of REINFORCE, REBAR, and RELAX.


Figure~\ref{learned-relaxations} plots the learned surrogate $c_\phi$ for a fixed value of $\theta$. We can see that $c_\phi$ is near $f$ for all $z$, keeping the variance of the REINFORCE part of the estimator small. Moreover the derivative of $c_\phi$ is positive for all $z$ meaning that the reparameterization part of the estimator will produce gradients pointing in the correct direction to optimize the expectation. Conversely, the concrete relaxation of REBAR is close to $f$ only near $0$ and $1$ and its gradient points in the correct direction only for values of $z > \log (\frac{1-t}{t})$. These factors together result in the RELAX estimator achieving the best performance. 

\subsection{Discrete variational autoencoder}
\label{vae section}
Next, we evaluate the \RELAX{} estimator on the task of training a variational autoencoder~\citep{kingma2013autoencoding, rezende2014stochastic} with Bernoulli latent variables.
We reproduced the variational autoencoder experiments from \citet{tucker2017rebar}, training models with one or two layers of 200 Bernoulli random variables with linear or nonlinear mappings between them, on both  the MNIST and Omniglot~\citep{lake2015human} datasets.
Details of these models and our experimental procedure can be found in Appendix~\ref{app_disc_vae}.

To take advantage of the available structure in the loss function, we choose the form of our control variate to be $c_\phi(z) = f(\sigma_\lambda(z))+  \hat{r}_\rho(z)$ where $\hat{r}_\rho$ is a neural network with parameters $\rho$ and $f(\sigma_\lambda(z))$ is the discrete loss function, the evidence lower-bound (ELBO), evaluated at continuously relaxed inputs as in REBAR.  
In all experiments, the learned control variate improved the training performance, over the state-of-the-art baseline of REBAR. In both linear models, we achieved improved validation performance as well increased convergence speed. We believe the decrease in validation performance for the nonlinear models was due to overfitting caused by improved optimization of an under-regularized model. We leave exploring this phenomenon to further work. 

\begin{table}[h]
\centering
\begin{tabular}{r l | c c c c c} 
Dataset & Model & Concrete & NVIL & MuProp  & REBAR & RELAX\\\midrule 
               & Nonlinear & $-102.2$ & $-101.5$ & -101.1  &  -81.01 &  \textbf{-78.13} \\
\textbf{MNIST} & linear one-layer  &-111.3 & $-112.5$ & $-111.7$  & -111.6 & \textbf{-111.20} \\ 
               & linear two-layer  &-99.62 & $-99.6$ & $-99.07$   & -98.22 & \textbf{-98.00} \\
\midrule
               & Nonlinear  & $-110.4$  & $-109.58$ & -108.72  & -56.76 & \textbf{-56.12} \\
\textbf{Omniglot} & linear one-layer &-117.23 & $-117.44$ & $-117.09$   & -116.63 & \textbf{-116.57} \\ 
                  & linear two-layer &-109.95 & $-109.98$ & $-109.55$  & -108.71 & \textbf{-108.54}
\end{tabular}
\caption{Highest training ELBO for discrete variational autoencoders.}
\label{tab:vae tr}
\end{table}


To obtain training curves we created our own implementation of REBAR, which gave identical or slightly improved performance compared to the implementation of \citet{tucker2017rebar}.

While we obtained a modest improvement in training and validation scores (tables~\ref{tab:vae tr} and \ref{tab:vae val}), the most notable improvement provided by \RELAX{} is in its rate of convergence.
Training curves for all models can be seen in Figure~\ref{fig:vae_curves} and in Appendix~\ref{extra vae results}.
In Table~\ref{tab:vae epochs} we compare the number of training epochs that are required to match the best validation score of REBAR.
In both linear models, RELAX provides an increase in rate of convergence. 

\begin{figure}
\centering
\hspace*{-.5in}
\setlength{\tabcolsep}{10pt}
\renewcommand{\arraystretch}{0}
\begin{tabular}{ccc}
&MNIST & Omniglot \\
\rotatebox{90}{\qquad \qquad \qquad \small -ELBO} & 
\includegraphics[width=.33\textwidth, clip, trim=3mm 3mm 3mm 2mm]{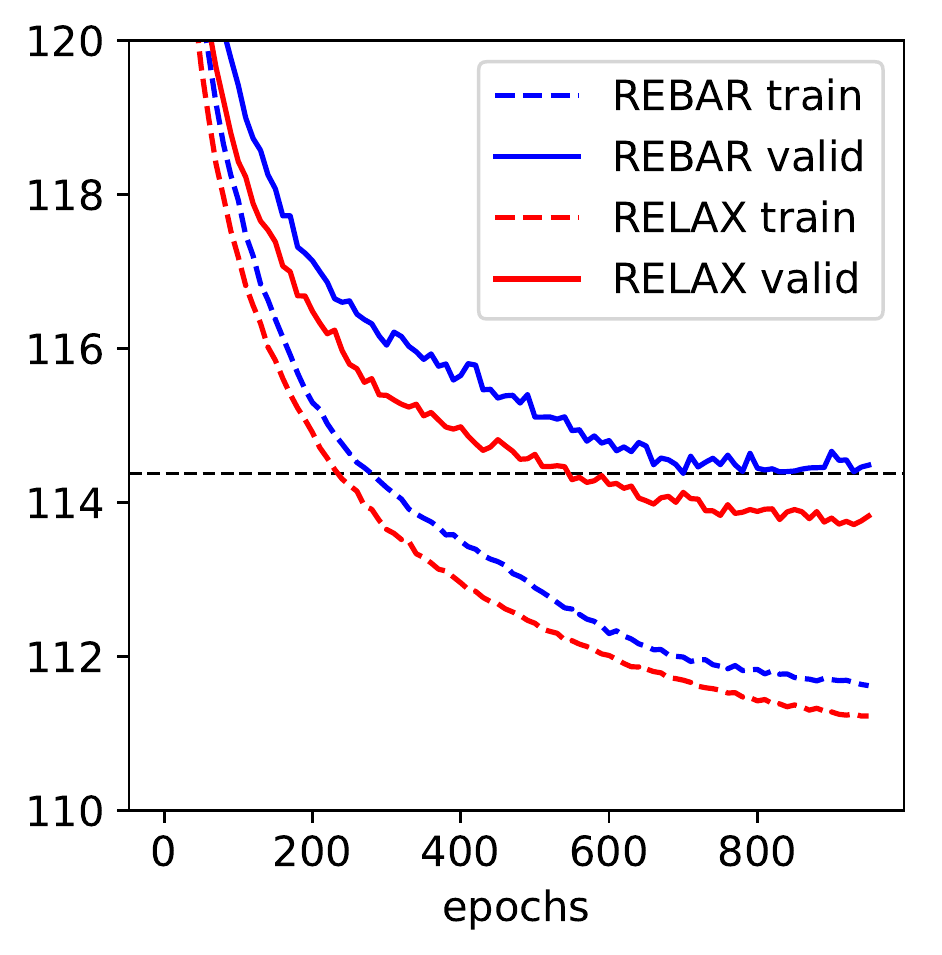} &
\includegraphics[width=.35\textwidth, clip, trim=3mm 3mm 3mm 2mm]{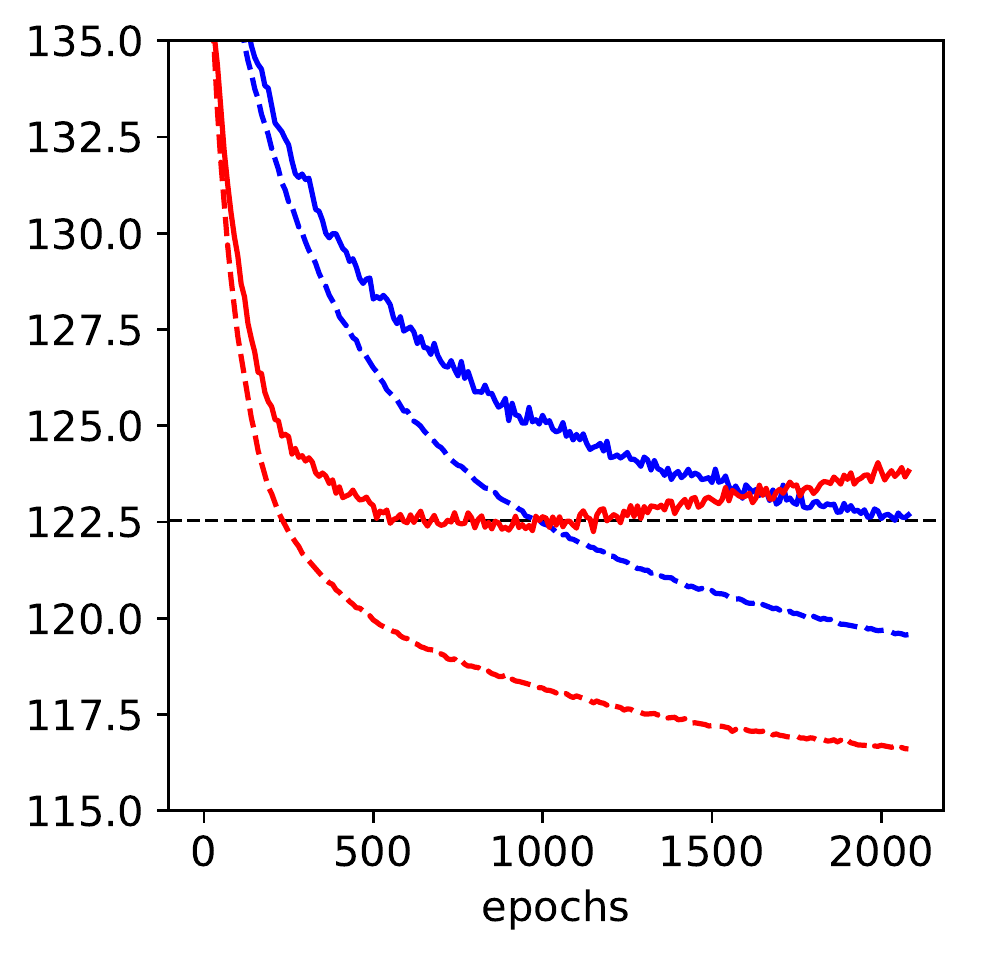}
\end{tabular}
\caption{Training curves for the VAE Experiments with the one-layer linear model.
The horizontal dashed line indicates the lowest validation error obtained by REBAR.}
\label{fig:vae_curves}
\end{figure}


\subsection{Reinforcement learning}
\label{experiments section}

We apply our gradient estimator to a few simple reinforcement learning environments with discrete and continuous actions.
We use the \RELAX{} and \LAX{} estimators for discrete and continuous actions, respectively. We compare with the advantage actor-critic algorithm (A2C)~\citep{sutton2000policy} as a baseline. 

As our control variate does not have the same interpretation as the value function of A2C, it was not directly clear how to add reward bootstrapping and other variance reduction techniques common in RL into our model. For instance, to do reward bootstrapping, we would need to use the state-value function. In the discrete experiments, due to the simplicity of the tasks, we chose not to use reward bootstrapping, and therefore omitted the use of state-value function. However, with the more complicated continuous tasks, we chose to use the value function to enable bootstrapping. In this case, the control variate takes the form: $c_\phi(a,s) = V(s) + \hat{c}(a,s)$, where $V(s)$ is trained as it would be in A2C. Full details of our experiments can be found in Appendix~\ref{experiment appendix}.



In the discrete action setting, we test our approach on the Cart Pole and Lunar Lander environments as provided by the OpenAI gym~\citep{1606.01540}.
In the continuous action setting, we test on the MuJoCo-simulated~\citep{todorov2012mujoco} environment Inverted Pendulum also found in the OpenAI gym.
In all tested environments we observe improved performance and sample efficiency using our method.
The results of our experiments can be seen in Figure~\ref{fig:rl_results}, and Table~\ref{tab:rl_results}.

We found that our estimator produced policy gradients with drastically reduced variance (see Figure~\ref{fig:rl_results}) allowing for larger learning rates to be used while maintaining stable training.
In both discrete environments our estimator achieved greater than a 2-times speedup in convergence over the baseline.

\newcommand{\rlfig}[1]{{\includegraphics[height=0.24\linewidth, clip, trim=3mm 3mm 3mm 3mm]{#1}}}%
\newcommand{\rlfigg}[1]{{\includegraphics[height=0.24\linewidth, clip, trim=2mm 2mm 2mm 2mm]{#1}}}%
\begin{figure}
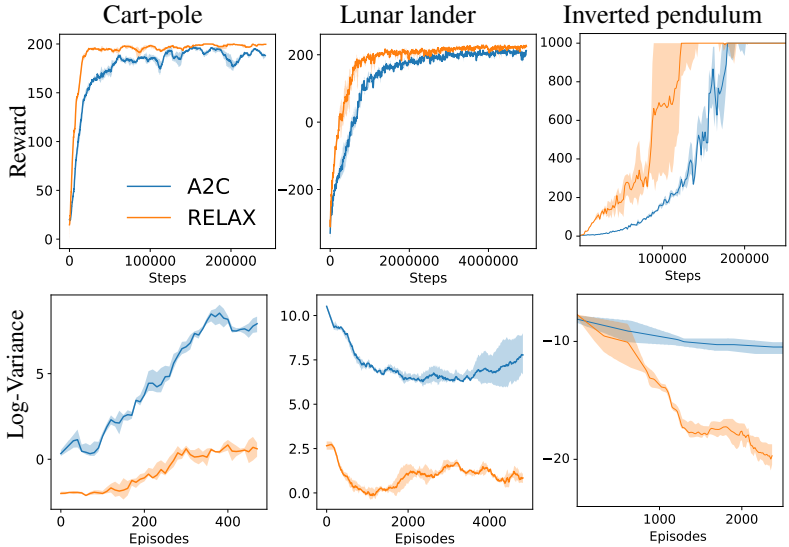
%
\centering
\hspace*{-.1in}
\setlength{\tabcolsep}{0pt}
\begin{tabular}{cccc}
& Cart-pole & Lunar lander & Inverted pendulum \\
\rotatebox{90}{\qquad \qquad \small Reward} & \rlfig{figures/cp_paper} & 
\rlfig{figures/ll_paper} &
\rlfigg{figures/ip_paper_NEW} \\
\rotatebox{90}{\qquad \qquad \small Log-Variance} & \rlfig{figures/cp_paper_var} & 
\rlfig{figures/ll_paper_var} &
\rlfigg{figures/ip_paper_var_NEW} \\
\end{tabular}
\caption{\emph{Top row:} Reward curves.
\emph{Bottom row:} Log-variance of policy gradients.
In each curve, the center line indicates the mean reward over 5 random seeds.
The opaque bars in the top row indicate the 25th and 75th percentiles.
The opaque bars in the bottom row indicate 1 standard deviation. Since the gradient estimator is defined at the end of each episode, we display log-variance per episode.
After every 10th training episode 100 episodes were run and the sample log-variance is reported averaged over all policy parameters. }
\label{fig:rl_results}
\end{figure}

\begin{table}%
\centering
\begin{tabular}{l | c c c }
\textbf{Model} & Cart-pole & Lunar lander & Inverted pendulum \\\midrule
A2C             & $1152 \pm 90$ & $162374 \pm 17241$                    & $6243 \pm 164$ \\
LAX/RELAX & $\bm{472 \pm 114}$ & $\bm{68712 \pm 20668}$ & $\bm{2067 \pm 412}$ \\
\end{tabular}
\caption{Mean episodes to solve tasks.
Definitions of solving each task can be found in Appendix~\ref{experiment appendix}.}
\label{tab:rl_results}
\end{table}


\section{Conclusions and future work}
\label{conclusion}
In this work we synthesized and generalized several standard approaches for constructing gradient estimators.
We proposed a generic gradient estimator that can be applied to expectations of known or black-box functions of discrete or continuous random variables, and adds little computational overhead.
We also derived a simple extension to reinforcement learning in both discrete and continuous-action domains. 


Future applications of this method could include training models with hard attention or memory indexing~\citep{zaremba2015reinforcement}.
One could also apply our estimators to continuous latent-variable models whose likelihood is non-differentiable, such as a 3D rendering engine.
Extensions to the reparameterization gradient estimator~\citep{ruiz2016generalized, naesseth2017reparameterization} could also be applied to increase the scope of distributions that can be modeled. 
%

 In the reinforcement learning setting, our method could be combined with other variance-reduction techniques such as generalized advantage estimation~\citep{kimura2000analysis, schulman2015high}, or other optimization methods, such as KFAC~\citep{wu2017scalable}.
One could also train our control variate off-policy, as in $Q$-prop~\citep{gu2016q}.

\subsection*{Acknowledgements}  
We thank Dougal Maclaurin, Tian Qi Chen, Elliot Creager, and Bowen Xu for helpful discussions.
We also thank Christopher Prohm for pointing out an error in one of our derivations. We would also like to thank George Tucker for pointing out a bug in our initially released reinforcement learning code. 

\bibliography{bibliography}
\bibliographystyle{iclr2018_conference}







\clearpage
\section*{Appendices}
\appendix

\section{The RELAX Algorithm}
\label{relax proof}

\begin{proof}
	We show that $\hat g_\textnormal{RELAX}$ is an unbiased estimator of $\frac{\partial}{\partial \theta} \mathbb{E}_{p(b \vert \theta)} \left[ f(b) \right]$. The estimator is
	
	\begin{align*}
	\E_{p(b|\theta)} \! \left[\left[ f(b) - \E_{p(\tilde{z}|b, \theta)} \! \left[c_\phi(\tilde{z}) \right] \right]\PT \log p(b|\theta)  - \PT \E_{p(\tilde{z}|b, \theta)} \! \left[c_\phi(\tilde{z}) \right] \right] + \PT\E_{p(z|\theta)} \! \left[ c_\phi(z) \right]. \span & \nonumber \\
	\end{align*}
	Expanding the expectation for clarity of exposition, we account for each term in the estimator separately:
	\begin{align}
	& \E_{p(b|\theta)} \! \left[ f(b) \PT \log p(b|\theta)  \right] \label{one}\\ 
	& - \E_{p(b|\theta)} \left[ \E_{p(\tilde{z}|b, \theta)} \! \left[c_\phi (\tilde{z}) \right] \PT \log p(b|\theta)  \right] \label{two}\\
	& - \E_{p(b|\theta)}  \left[ \PT \E_{p(\tilde{z}|b, \theta)} \! \left[c_\phi(\tilde{z}) \right] \right] \label{three}\\
	& + \PT\E_{p(z|\theta)} \! \left[ c_\phi(z) \right]. \label{four}
	\end{align}
	Term \eqref{one} is an unbiased score-function estimator of $\frac{\partial}{\partial \theta} \mathbb{E}_{p(b \vert \theta)} \left[ f(b) \right]$. It remains to show that the other three terms are zero in expectation. Following \cite{tucker2017rebar} (see the appendices of that paper for a derivation), we rewrite term \eqref{three} as follows:
	\begin{align}
	-\mathbb{E}_{p(b \vert \theta)} \left[ \frac{\partial}{\partial \theta} \mathbb{E}_{p(\tilde{z} \vert b, \theta)} \left[ c_\phi (\tilde{z}) \right] \right] =~&\mathbb{E}_{p(b \vert \theta)} \left[ \mathbb{E}_{p(\tilde{z} \vert b, \theta)} \left[ c_\phi(\tilde{z}) \right] \frac{\partial}{\partial \theta} \log p(b \vert \theta) \right] \nonumber \\ & -  \mathbb{E}_{p(z \vert \theta)} \left[ c_\phi(z) \frac{\partial}{\partial \theta} \log p(z) \right] . \label{six}
	\end{align}
	Note that the first term on the right-hand side of equation \eqref{six} is equal to term \eqref{two} with opposite sign. The second term on the right-hand side of equation \eqref{six} is the score-function estimator of term \eqref{four}, opposite in sign. The sum of these terms is zero in expectation.
	\\
\end{proof}

\begin{algorithm}[h]
	\begin{algorithmic}
		\Require $f(\cdot)$, $\log p(b|\theta)$, reparameterized samplers $b = H(z)$, $z = S(\epsilon, \theta)$ and $\tilde{z} = S(\epsilon, \theta | b)$, \\ 
		\hspace{3em} neural network $c_\phi(\cdot)$, step sizes $\alpha_1, \alpha_2$  \While {not converged} 
		\State $\epsilon_{i}, \widetilde{\epsilon_i} \sim p(\epsilon)$ \Comment Sample noise
		\State $z_i \leftarrow S(\epsilon_i, \theta)$ \Comment Compute unconditional relaxed input
		\State $b_i \leftarrow H(z_i)$ \Comment Compute input
		\State $\widetilde{z_i} \leftarrow S(\widetilde{\epsilon_i}, \theta | b_i)$ \Comment Compute conditional relaxed input
		\State  $\hat{g}_\theta \leftarrow \left[f(b_i) - c_{\phi}(\widetilde{z_i}) \right] \nabla_\theta \log p + \nabla_\theta c_\phi(z_i) - \nabla_\theta c_\phi(\widetilde{z_i})$ \Comment Estimate gradient
		\State  $\hat{g}_\phi \leftarrow \partial \hat{g}_\theta^2 / \partial \phi$ \Comment Estimate gradient of variance of gradient
		\State $\theta \leftarrow \theta - \alpha_1 \hat{g}_\theta$ \Comment Update parameters
		\State $\phi \leftarrow \phi - \alpha_2 \hat{g}_\phi$ \Comment Update control variate
		\EndWhile
		\State \textbf{return} $\theta$ 
	\end{algorithmic}
	\caption{\RELAX{}: Low-variance control variate optimization for black-box gradient estimation.}
	\label{relax}
\end{algorithm}

\section{Conditional Re-sampling for Discrete Random Variables}
\label{resample}
When applying the RELAX estimator to a function of discrete random variables $b \sim p(b|\theta)$, we require that there exists a distribution $p(z|\theta)$ and a deterministic mapping $H(z)$ such that if $z \sim p(z|\theta)$ then $H(z) = b \sim p(b|\theta)$. Treating both $b$ and $z$ as random, this procedure defines a probabilistic model $p(b, z | \theta) = p(b|z)p(z|\theta)$. The RELAX estimator requires reparameterized samples from $p(z|\theta)$ and $p(z|b,\theta)$. We describe how to sample from these distributions in the common cases of $p(b|\theta) = \text{Bernoulli}(\theta)$ and $p(b|\theta) = \text{Categorical}(\theta)$.

\paragraph{Bernoulli} When $p(b|\theta)$ is Bernoulli distribution we let $H(z) = \mathbb{I}(z>0)$ and we sample from $p(z|\theta)$ with 
$$ z = \log \frac{\theta}{1 - \theta} + \log \frac{u}{1-u}, \qquad u \sim \text{uniform}[0,1].
$$ 
We can sample from $p(z|b, \theta)$ with 
\[
v' =    \left\{
\begin{array}{ll}
      v\cdot(1-\theta) & b = 0 \\
      v\cdot\theta + (1 - \theta) & b = 1 \\
\end{array} 
\right.
\]
$$ \tilde{z} = \log \frac{\theta}{1 - \theta} + \log \frac{v'}{1-v'}, \qquad v \sim \text{uniform}[0, 1].$$

\paragraph{Categorical} When $p(b|\theta)$ is a Categorical distribution where $\theta_i = p(b=i|\theta)$, we let $H(z) = \text{argmax}(z)$ and we sample from $p(z|\theta)$ with 
$$ z = \log\theta -\log(-\log u), \qquad u \sim \text{uniform}[0,1]^k
$$ where $k$ is the number of possible outcomes.

To sample from $p(z|b, \theta)$, we note that the distribution of the largest $\hat z_b$ is independent of $\theta$, and can be sampled as $\hat z_b = -\log (-\log v_b)$ where $v_b\sim \text{uniform}[0, 1]$.
Then, the remaining $v_{i\neq b}$ can be sampled as before but with their underlying noise truncated so $\hat z_{i \neq b} < \hat z_b$. As shown in the appendix of \cite{tucker2017rebar}, we can then sample from $p(z|b, \theta)$ with:
\begin{align}
\hat z_i =    \left\{
\begin{array}{ll}
      -\log(-\log v_i )& i = b \\
      -\log \left( -\frac{\log v_i}{\theta_i} - \log v_b \right) & i \neq b \\
\end{array} 
\right.
\end{align}
where $v_i \sim \text{uniform}[0, 1]$.

\section{Derivations of estimators used in Reinforcement learning}
\label{rl appendix}
We give the derivation of the \LAX{} estimator used for continuous RL tasks.
\begin{theorem}
The \LAX{} estimator,
\begin{align}
\hat g_\LAX^{\RL} = \sum_{t=1}^{T} \LL{t} \left[ \sum_{t'=t}^{T} r_{t'} - c_\phi(a_t,s_t) \right] +\frac{\partial}{\partial\theta} c_\phi(a_t, s_t), \\
a_t = a_t(\epsilon_t,s_t,\theta), \quad \epsilon_t \sim p(\epsilon_t)\nonumber,
\end{align}
is unbiased.
\end{theorem}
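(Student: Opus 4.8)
The plan is to decompose $\hat g_\LAX^{\RL}$ into a standard policy-gradient term plus a collection of per-timestep correction terms, and then show the corrections vanish in expectation by reducing each one to the single-step \LAX{} identity already established for \eqref{eq:cont_est}. First I would split the estimator as
\[
\hat g_\LAX^{\RL} = \sum_{t=1}^{T}\LL{t}\sum_{t'=t}^{T} r_{t'} + \sum_{t=1}^{T}\left[ \PT c_\phi(a_t,s_t) - \LL{t}\, c_\phi(a_t,s_t)\right].
\]
The first sum is exactly the score-function (reward-to-go) policy gradient, which is an unbiased estimator of $\PT\E_{p(\tau|\theta)}[R(\tau)]$ by the policy-gradient theorem; it is the same term that makes A2C in \eqref{eq:rl_a2c} unbiased, and it is where the Markov/causality assumption enters, since rewards earlier than $t$ are uncorrelated with $\LL{t}$ once we condition on the past. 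So it remains to prove that each bracketed correction term has zero expectation under $p(\tau|\theta)$.

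The key step is to apply the tower property, conditioning on the entire history $h_t = (s_1,a_1,\dots,a_{t-1},s_t)$ up to and including the state $s_t$. Conditioned on $h_t$, the action is drawn from the reparameterizable policy $a_t = a_t(\epsilon_t, s_t, \theta)$ with $s_t$ fixed, so in this conditional world the correction term is precisely the control-variate (``extra'') part of the single-step \LAX{} estimator \eqref{eq:cont_est}, with $b$ replaced by $a_t$ and $p(b|\theta)$ by $\pi(a_t|s_t,\theta)$. There, the score-function and reparameterization estimators of $\E_{a_t|s_t}[c_\phi(a_t,s_t)]$ coincide, i.e.
\[
\E_{a_t \mid s_t}\!\left[ \LL{t}\, c_\phi(a_t,s_t)\right] = \PT\,\E_{a_t \mid s_t}\!\left[ c_\phi(a_t,s_t)\right] = \E_{\epsilon_t}\!\left[\PT c_\phi(a_t,s_t)\right],
\]
where in the rightmost expression $\PT$ differentiates only the policy reparameterization with $s_t$ held fixed. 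Hence the conditional expectation of each bracketed term is zero, and taking the outer expectation over $h_t$ leaves zero. Summing over $t$ then yields $\E[\hat g_\LAX^{\RL}] = \PT\E_{p(\tau|\theta)}[R(\tau)]$.

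The main obstacle I anticipate is bookkeeping the dependence of $s_t$ on $\theta$ and pinning down the meaning of $\PT c_\phi(a_t,s_t)$. Because the environment dynamics $p(s_{t+1}\mid s_t,a_t)$ are a black box, we neither can nor do differentiate through them; the reparameterization derivative must be interpreted as flowing through the current action only, namely $\tfrac{\partial c_\phi}{\partial a_t}\tfrac{\partial a_t}{\partial \theta}$, with $s_t$ treated as a sampled constant. The Markov assumption is exactly what guarantees that conditioning on $s_t$ captures all of the relevant $\theta$-dependence of the distribution of $a_t$, so that the single-step identity applies verbatim at each timestep; without it the conditioning in the tower property would not isolate a clean reparameterizable draw, and the cancellation of the correction terms would fail.
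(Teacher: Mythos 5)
Your proposal is correct and follows essentially the same route as the paper's proof: both arguments reduce to the key identity that, conditioned on the history $(a_{1:t-1}, s_{1:t})$, the score-function term $\LL{t}\, c_\phi(a_t,s_t)$ and the reparameterization term $\PT c_\phi(a_t(\epsilon_t,s_t,\theta), s_t)$ are both unbiased estimators of $\PT \E_{\pi(a_t|s_t,\theta)}[c_\phi(a_t,s_t)]$, so their difference vanishes in expectation, while the remaining reward-to-go score-function sum is the standard unbiased policy gradient. The only difference is direction of presentation (you decompose the estimator and kill the correction terms; the paper starts from the true gradient and adds and subtracts the same quantity), which is immaterial.
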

\begin{proof}
Note that by using the score-function estimator, for all $t$, we have 
\begin{align*}
\E_{p(\tau)}\Big[\LL{t} c_\phi(a_t, s_t)\Big] = \E_{p(a_{1:t-1},s_{1:t})}\Big[\frac{\partial}{\partial\theta}\E_{\pi(a_t|s_t, \theta)}\Big[c_\phi(a_t, s_t)\Big]\Big].
\end{align*}
Then, by adding and subtracting the same term, we have
\begin{align*}
\PT\E_{p(\tau)}[f(\tau)] &= \E_{p(\tau)}\left[f(\tau)\cdot\LP{\tau;\theta}\right]-\sum_t\E_{p(\tau)}\Big[\LL{t} c_\phi(a_t, s_t)\Big]+\\&\sum_t \E_{p(a_{1:t-1},s_{1:t})}\Big[\frac{\partial}{\partial\theta}\E_{\pi(a_t|s_t, \theta)}\Big[c_\phi(a_t,s_t)\Big]\Big]\nonumber\\
&= \E_{p(\tau)}\left[ \sum_{t=1}^{\infty} \LL{t}\left(\sum_{t'=t}^{\infty} r_{t'} - c_\phi(a_t,s_t)\right)\right]\\
& \qquad + \sum_t \E_{p(a_{1:t-1},s_{1:t})}\Big[\E_{p(\epsilon_t)}\Big[\frac{\partial}{\partial\theta}c_\phi(a_t(\epsilon_t,s_t,\theta), s_t)\Big]\Big]\nonumber\\
&= \E_{p(\tau)}\left[ \sum_{t=1}^{\infty} \LL{t}\left(\sum_{t'=t}^{\infty} r_{t'} - c_\phi(a_t,s_t)\right)+\frac{\partial}{\partial\theta}c_\phi(a_t(\epsilon_t,s_t,\theta), s_t)\right]\nonumber
\end{align*}
\end{proof}

In the discrete control setting, our policy parameterizes a soft-max distribution which we use to sample actions. We define $z_t\sim p(z_t|s_t)$, which is equal to $\sigma (\log\pi - \log(-\log(u)))$ where $u\sim \text{uniform}[0, 1]$, $a_t = \text{argmax}(z_t)$, $\sigma$ is the soft-max function. We also define $\tilde{z_t} \sim p(z_t|a_t,s_t)$ and uses the same reparametrization trick for sampling $\tilde{z_t}$ as explicated in Appendix \ref{resample}.
\begin{theorem}
The \RELAX{} estimator,
\begin{align}
\hat g_\RELAX^{\RL} = \sum_{t=1}^{T} \LL{t}\left(\sum_{t'=t}^{T} r_{t'} - c_\phi(\tilde{z}_t, s_t)\right)-\frac{\partial}{\partial\theta}c_\phi(\tilde{z}_t, s_t)+\frac{\partial}{\partial\theta}c_\phi(z_t, s_t), \label{eq:relaxrlproof}\\
\tilde{z}_t \sim p(z_t|a_t,s_t), \qquad z_t \sim p(z_t|s_t)\nonumber, 
\end{align}
is unbiased.
\end{theorem}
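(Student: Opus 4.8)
The plan is to combine the two arguments already present in this paper: the per-timestep score-function decomposition from the proof of the $\hat g_\LAX^{\RL}$ theorem above, and the three-term control-variate cancellation from the proof in Appendix~\ref{relax proof}. First I would invoke the policy-gradient identity in its return-to-go form, $\PT \E_{p(\tau)}[R(\tau)] = \E_{p(\tau)}\big[\sum_{t=1}^{T} \LL{t}\sum_{t'=t}^{T} r_{t'}\big]$, exactly as in the $\hat g_\LAX^{\RL}$ derivation. Subtracting this from the claimed estimator, it then suffices to prove that the control-variate remainder $\sum_{t=1}^{T}\big(-\LL{t}\,c_\phi(\tilde z_t, s_t) - \PT c_\phi(\tilde z_t, s_t) + \PT c_\phi(z_t, s_t)\big)$ is zero in expectation, which I would establish one time step at a time.

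For a fixed $t$, I would condition on the history $(s_{1:t}, a_{1:t-1})$ and integrate out everything after time $t$, so that the only remaining randomness driving the time-$t$ terms is $z_t \sim p(z_t|s_t)$, $a_t = H(z_t)$, and $\tilde z_t \sim p(z_t|a_t, s_t)$. With $s_t$ held fixed, this is exactly the discrete RELAX setting of Appendix~\ref{relax proof} under the substitution $b \mapsto a_t$, $z \mapsto z_t$, $\tilde z \mapsto \tilde z_t$ and with no $f$ term. Writing $C_t(a_t) := \E_{p(\tilde z_t|a_t, s_t)}[c_\phi(\tilde z_t, s_t)]$ and using the marginalization $p(z_t|s_t) = \sum_{a_t} p(z_t|a_t, s_t)\,\pi(a_t|s_t,\theta)$, the identity~\eqref{six} applies verbatim, giving $-\E[\PT c_\phi(\tilde z_t, s_t)] = \E[C_t(a_t)\LL{t}] - \E_{p(z_t|s_t)}[c_\phi(z_t, s_t)\PT \log p(z_t|s_t)]$. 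Since integrating out $\tilde z_t$ gives $\E[\LL{t}\,c_\phi(\tilde z_t, s_t)] = \E[\LL{t}\,C_t(a_t)]$ and the reparameterization term satisfies $\E[\PT c_\phi(z_t, s_t)] = \E_{p(z_t|s_t)}[c_\phi(z_t, s_t)\PT \log p(z_t|s_t)]$, the three contributions cancel and the time-$t$ remainder is zero; summing over $t$ and taking the outer expectation over histories then finishes the argument.

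The step I expect to be the main obstacle is the careful bookkeeping of the $\theta$-dependence. The reparameterization derivatives $\PT c_\phi(z_t, s_t)$ and $\PT c_\phi(\tilde z_t, s_t)$ must be read as differentiating only the sampling paths of $z_t$ and $\tilde z_t$ at a \emph{frozen} state $s_t$; all of the $\theta$-dependence that enters through the state-visitation distribution has to be accounted for by the score-function factors $\LL{t}$ rather than by these derivatives. This is precisely where the Markov assumption enters, just as in the $\hat g_\LAX^{\RL}$ theorem: unbiasedness holds provided the dynamics are Markov with respect to $s_t$ and do not depend on $\theta$ except through the sampled actions. Once the conditioning is arranged so that $s_t$ is fixed inside each inner expectation, the RELAX cancellation of Appendix~\ref{relax proof} transfers without modification.
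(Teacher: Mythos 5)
Your proposal is correct and is essentially the paper's own argument: both rest on the per-timestep conditioning on the history, the score-function identity $\E\big[\LL{t}\,g(a_t,s_t)\big]=\E\big[\PT\E_{\pi(a_t|s_t,\theta)}[g]\big]$ together with the marginalization $p(z_t|s_t)=\E_{\pi(a_t|s_t,\theta)}\left[p(z_t|a_t,s_t)\right]$, and the three-term RELAX cancellation from Appendix~\ref{relax proof}. The only difference is presentational --- you subtract the known return-to-go policy gradient and show the control-variate remainder vanishes timestep by timestep, whereas the paper adds and subtracts the same quantity and then re-expresses it --- which does not constitute a genuinely different route.
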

\begin{proof}
Note that by using the score-function estimator, for all $t$, we have 
\begin{align*}
& \E_{p(a_{1:t},s_{1:t})}\Big[\LL{t} \E_{p(z_t|a_t,s_t)}[c_\phi(z_t, s_t)]\Big]\\
 &= \E_{p(a_{1:t-1},s_{1:t})}\Big[\frac{\partial}{\partial\theta}\E_{\pi(a_t|s_t, \theta)}\Big[\E_{p(z_t|a_t,s_t)}[c_\phi(z_t, s_t)]\Big]\Big]\\
&=\E_{p(a_{1:t-1},s_{1:t})}\Big[\frac{\partial}{\partial\theta}\E_{p(z_t|s_t)}[c_\phi(z_t, s_t)]\Big]
\end{align*}
Then, by adding and subtracting the same term, we have
\begin{align*}
\PT\E_{p(\tau)}[f(\tau)] &= \E_{p(\tau)}\left[f(\tau)\cdot\LP{\tau;\theta}\right]\\
& \qquad -\sum_t\E_{p(a_{1:t},s_{1:t})}\Big[\LL{t} \E_{p(z_t|a_t,s_t)}[c_\phi(z_t, s_t)]\Big]\\
& \qquad + \sum_t\E_{p(a_{1:t-1},s_{1:t})}\Big[\frac{\partial}{\partial\theta}\E_{p(z_t|s_t)}[c_\phi(z_t, s_t)]\Big]\nonumber\\
& = \E_{p(\tau)}\left[ \sum_{t=1}^{\infty} \LL{t}\left(\sum_{t'=t}^{\infty} r_{t'} - \E_{p(z_t|a_t,s_t)}[c_\phi(z_t, s_t)]\right)\right]\\
& \qquad + \sum_t\E_{p(a_{1:t-1},s_{1:t})}\Big[\frac{\partial}{\partial\theta}\E_{p(z_t|s_t)}[c_\phi(z_t, s_t)]\Big]\nonumber\\
& = \E_{p(\tau)}\Big[ \sum_{t=1}^{\infty} \LL{t}\left(\sum_{t'=t}^{\infty} r_{t'} - \E_{p(z_t|a_t,s_t)}[c_\phi(z_t, s_t)\right)\\
& \qquad - \frac{\partial}{\partial\theta}\E_{p(z_t|a_t,s_t)}[c_\phi(z_t, s_t)]+\frac{\partial}{\partial\theta}\E_{p(z_t|s_t)}[c_\phi(z_t, s_t)]\Big]\nonumber
\end{align*}
Since $p(z_t|s_t)$ is reparametrizable, we obtain the estimator in Eq.(\ref{eq:relaxrlproof}).
\end{proof}

\section{Further results on discrete variational autoencoders}
\label{extra vae results}

\begin{table}[h]
\centering
\begin{tabular}{r l | c c} 
  Dataset & Model & REBAR & RELAX \\\midrule
 & one-layer linear  & -114.32 & \textbf{-113.62} \\ 
\textbf{MNIST} & two-layer linear  & -101.20 & \textbf{-100.85}\\
& Nonlinear & \textbf{-111.12} & 119.19 \\ \midrule
 & one-layer linear & -122.44 & \textbf{-122.11} \\ 
\textbf{Omniglot}& two-layer linear & -115.83 & \textbf{-115.42}\\
& Nonlinear& \textbf{-127.51} & 128.20
\end{tabular}
\caption{Highest obtained validation ELBO.}
\label{tab:vae val}
\end{table}

\begin{table}[h]
\centering
\begin{tabular}{r l | c c} 
 Dataset & Model  & REBAR & RELAX \\\midrule
 & one-layer  & 857 & \textbf{531} \\ 
\textbf{MNIST} & two-layer  & 900 & \textbf{620} \\
& Nonlinear & \textbf{331} & - \\
\midrule
& one-layer & 2086 & \textbf{566} \\ 
\textbf{Omniglot}  & two-layer & 1027 & \textbf{673}\\
& Nonlinear & \textbf{368} & - 
\end{tabular}
\caption{Epochs needed to achieve REBAR's best validation score. ``-'' indicates that the nonlinear RELAX models achieved lower validation scores than REBAR.}
\label{tab:vae epochs}
\end{table}

\begin{figure}
\centering
\hspace*{-.5in}
\setlength{\tabcolsep}{10pt}
\renewcommand{\arraystretch}{0}
\begin{tabular}{ccc}
& MNIST & Omniglot \\
\rotatebox{90}{\qquad \qquad \qquad \small -ELBO} & 
\includegraphics[width=.31\textwidth, clip, trim=3mm 3mm 3mm 2mm]{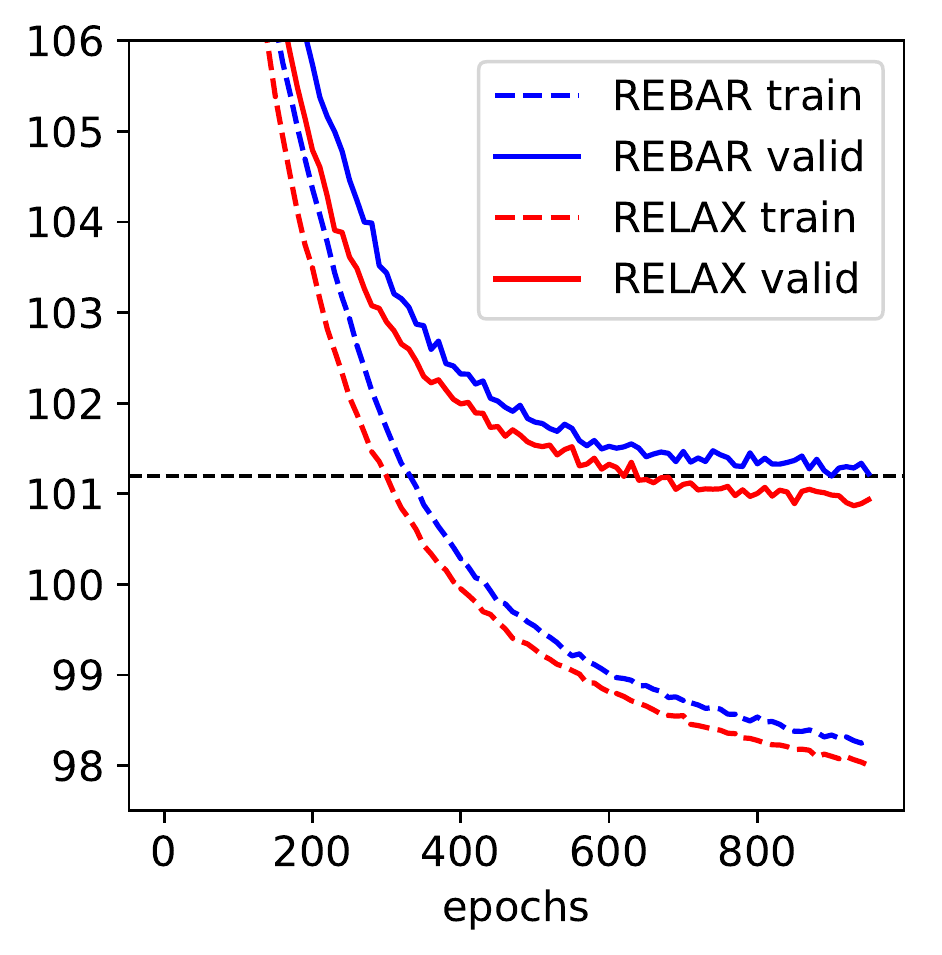} &
\includegraphics[width=.31\textwidth, clip, trim=3mm 3mm 3mm 2mm]{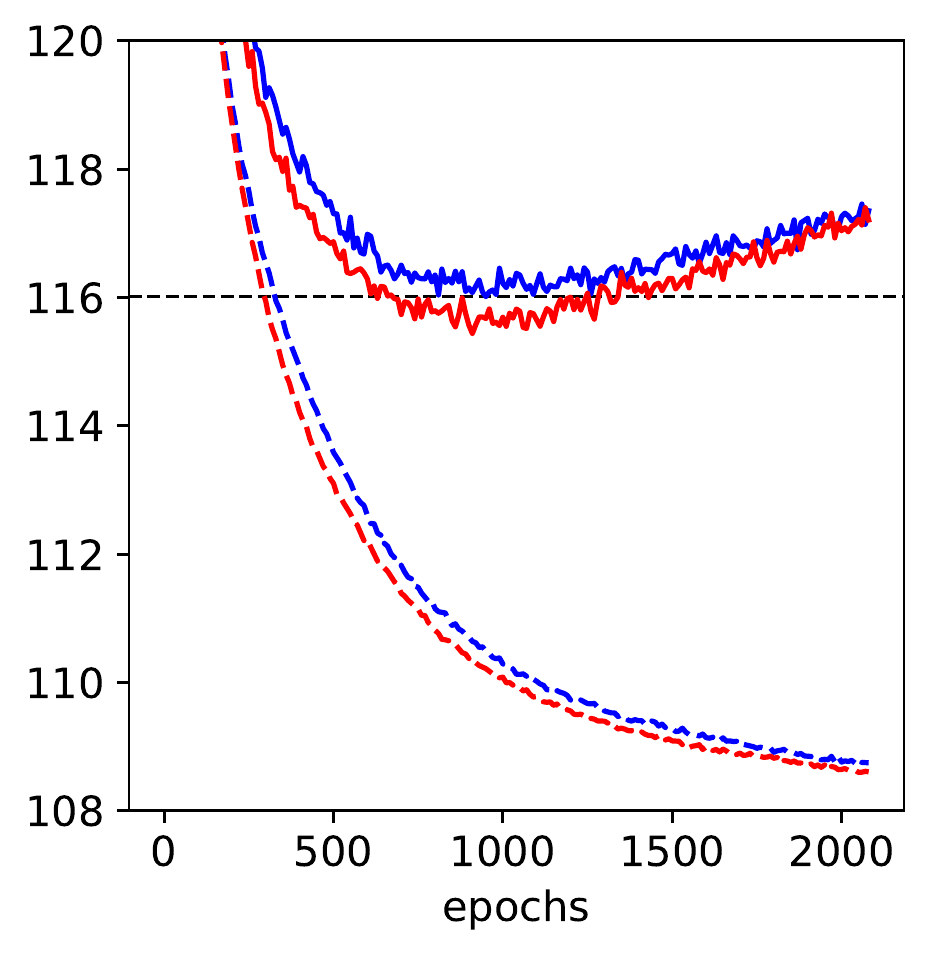}\\
\end{tabular}
\caption{Training curves for the VAE Experiments with the two-layer linear model.
The horizontal dashed line indicates the lowest validation error obtained by REBAR.}
\label{fig:vae curves2}
\end{figure}

\begin{figure}
\centering
\hspace*{-.5in}
\setlength{\tabcolsep}{10pt}
\renewcommand{\arraystretch}{0}
\begin{tabular}{ccc}
& MNIST & Omniglot \\
\rotatebox{90}{\qquad \qquad \qquad \small -ELBO} & 
\includegraphics[width=.31\textwidth, clip, trim=3mm 3mm 3mm 2mm]{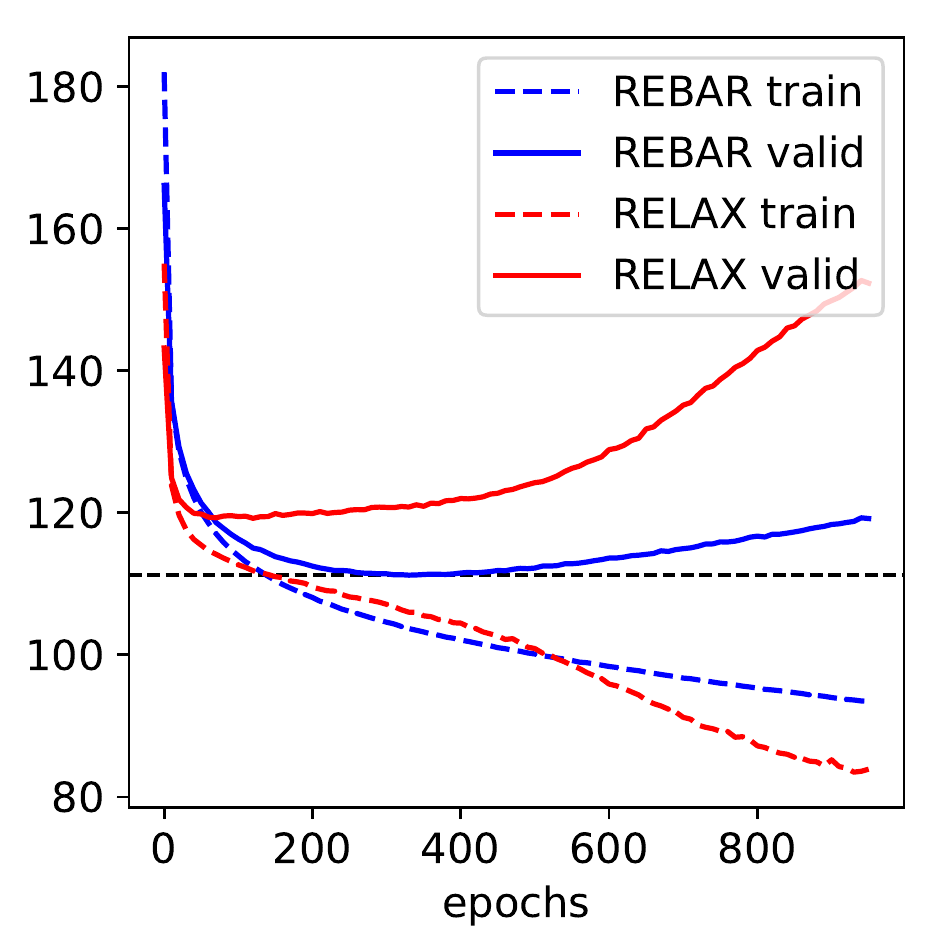} &
\includegraphics[width=.31\textwidth, clip, trim=3mm 3mm 3mm 2mm]{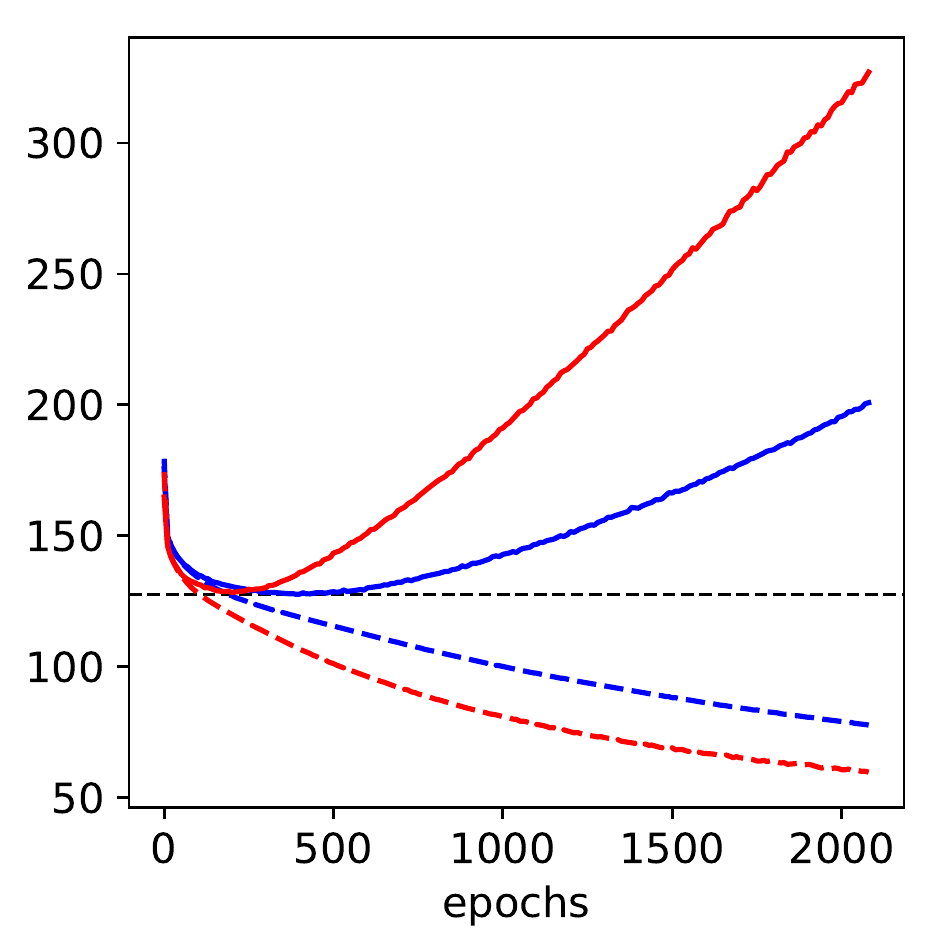}\\
\end{tabular}
\caption{Training curves for the VAE Experiments with the one-layer nonlinear model.
The horizontal dashed line indicates the lowest validation error obtained by REBAR.}
\label{fig:vae curves3}
\end{figure}

\section{Experimental Details}
\label{experiment appendix}

\subsection{Discrete VAE}
We run all models for $2,000,000$ iterations with a batch size of $24$.
For the REBAR models, we tested learning rates in $\{.005, .001, .0005,  .0001, .00005\}$. 

\RELAX{} adds more hyperparameters.
These are the depth of the neural network component of our control variate $r_\rho$, the weight decay placed on the network, and the scaling on the learning rate for the control variate.
We tested neural network models with $l$ layers of 200 units using the ReLU nonlinearity with $l \in \{2, 4\}$.
We trained the control variate with weight decay in $\{.001, .0001\}$.
We trained the control variate with learning rate scaling in $\{1, 10\}$.

To limit the size of hyperparameter search for the RELAX models, we only test the best performing learning rate for the REBAR baseline and the next largest learning rate in our search set.
In many cases, we found that RELAX allowed our model to converge at learning rates which made the REBAR estimators diverge.
We believe further improvement could be achieved by tuning this parameter. It should be noted that in our experiments, we found the RELAX method to be fairly insensitive to all hyperparameters other than learning rate. In general, we found the larger (4 layer) control variate architecture with weight decay of $.001$ and learning rate scaling of $1$ to work best, but only slightly outperformed other configurations.  

All presented results are from the models which achieve the highest ELBO on the validation data.
\label{app_disc_vae}
\subsubsection{One-layer linear model}
In the one-layer linear models we optimize the evidence lower bound (ELBO): $$\log p(x) \geq \mathcal{L}(\theta) = \E_{q(b|x)}[\log p(x|b) + \log p(b) - \log q(b|x)]$$ where $q(b_1|x) = \sigma(x\cdot W_q + \beta_q)$ and $p(x| b_1) = \sigma(b_1\cdot W_p + \beta_p)$ with weight matrices $W_q,W_p$ and bias vectors $\beta_q,\beta_p$.
The parameters of the prior $p(b)$ are also learned.

\subsubsection{Two layer linear model}
In the two layer linear models we optimize the ELBO $$\mathcal{L}(\theta) = \E_{q(b_2|b_1)q(b_1|x)}[\log p(x|b_1) + \log p(b_1|b_2) + \log p(b_2) - \log q(b_1|x) - \log q(b_2|b_1)]$$ where $q(b_1|x) = \sigma(x\cdot W_{q_1} + \beta_{q_1})$, $q(b_2|b_1) = \sigma(b_1\cdot W_{q_2} + \beta_{q_2})$, $p(x| b_1) = \sigma(b_1\cdot W_{p_1} + \beta_{p_1})$, and $p(b_1| b_2) = \sigma(b_2\cdot W_{p_2} + \beta_{p_2})$ with weight matrices $W_{q_1},W_{q_2},W_{p_1},W_{p_2}$ and biases $\beta_{q_1},\beta_{q_2},\beta_{p_1},\beta_{p_2}$. As in the one-layer model, the prior $p(b_2)$ is also learned.

\subsubsection{Nonlinear model}
In the one-layer nonlinear model, the mappings between random variables consist of 2 deterministic layers with 200 units using the hyperbolic-tangent nonlinearity followed by a linear layer with 200 units. 

We run an identical hyperpameter search in all models.

\subsection{Discrete RL}
In both the baseline A2C and RELAX models, the policy and control variate (value function in the baseline model) were two-layer neural networks with 10 units per layer.
The ReLU non linearity was used on all layers except for the output layer which was linear.

For these tasks we estimate the policy gradient with a single Monte Carlo sample.
We run one episode of the environment to completion, compute the discounted rewards, and run one iteration of gradient descent.
We believe using larger batches will improve performance but would less clearly demonstrate the potential of our method. 


Both models were trained with the RMSProp~\citep{Tieleman2012} optimizer and a reward discount factor of $.99$ was used. Entropy regularization with a weight of $.01$ was used to  encourage exploration. 

Both models have 2 hyperparameters to tune; the global learning rate and the scaling factor on the learning rate for the control variate (or value function).
We complete a grid search for both parameters in $\{0.01, 0.003, 0.001\}$ and present the model which ``solves'' the task in the fewest number of episodes averaged over 5 random seeds.
``Solving'' the tasks was defined by the creators of the OpenAI gym~\citep{1606.01540}.
The Cart Pole task is considered solved if the agent receives an average reward greater than 195 over 100 consecutive episodes.
The Lunar Lander task is considered solved if the agent receives an average reward greater than 200 over 100 consecutive episodes. 

The Cart Pole experiments were run for 250,000 frames.
The Lunar Lander experiments were run for 5,000,000 frames. 

The results presented for the CartPole and LunarLander environments were obtained using a slightly biased sampler for $p(z|b, \theta)$.

\subsection{Continuous RL}
The three models- policy, value, and control variate, are two-layer neural networks with 64 hidden units per layer.
The value and control variate networks are identical, with the ELU~\citep{Clevert2016ELUs} nonlinearity in each hidden layer.
The policy network has \texttt{tanh} nonlinearity.
The policy network, which parameterizes the Gaussian policy comprises of a network (with the architecture mentioned above) that outputs the mean, and a separate, trainable log standard deviation value that is not input dependent.
All three networks have a linear output layer.
We selected the batch size to be 2500, meaning for a fixed timestep (2500) we collect multiple rollouts of a task and update the networks' parameters with the batch of episodes.
Per one policy update, we optimize both the value and control variate network multiple times.
The number of times we train  the value network is fixed to 25, while for the control variate, it was chosen to be a hyperparameter. 
All models were trained using ADAM~\citep{kingma2015adam}, with $\beta_1=0.9$, $\beta_2=0.999$, and $\epsilon=1e-08$. 

The baseline A2C case has 2 hyperparameters to tune: the learning rate for the optimizer for the policy and value network.
A grid search was done over the set: $\{0.03, 0.003, 0.0003\}$.
\RELAX{} has 4 hyperparameters to tune: 3 learning rates for the optimizer per network, and the number of training iterations of the control variate per policy gradient update.
Due to the large number of hyperparameters, we restricted the size of the grid search set to $\{0.003, 0.0003\}$ for the learning rates, and $\{1, 5, 25\}$ for the control variate training iteration number.
We chose the hyperparameter setting that yielded the shortest episode-to-completion time averaged over 5 random seeds.
As with the discrete case, we used the definition of completion provided by the OpenAI gym~\citep{1606.01540} for each task. 

The Inverted Pendulum experiments were run for 1,000,000 frames.

\citet{tucker2018mirage} pointed out a bug in our initially released code for the continuous RL experiments. This issue has been fixed in the publicly available code and the results presented in this paper were generated with the corrected code.

\subsubsection{Implementation Considerations}
For continuous RL tasks, it is convention to employ a batch of a fixed number of timesteps (here, 2500) in which the number of episodes vary. We follow this convention for the sake of providing a fair comparison to the baseline. However, this causes a complication when calculating the variance loss for the control variate because we must compute the variance averaged over completed episodes, which is difficult to obtain when the number of episodes is not fixed. For this reason, in our implementation we compute the gradients for the control variate outside of the Tensorflow computation graph. However, for practical reasons we recommend using a batch of fixed number of episodes when using our method.
\end{document}